\pdfoutput=1
\documentclass{article}

\usepackage{graphicx} 
\usepackage{booktabs}
\usepackage{amsthm}
\usepackage{amsmath,amssymb,amsfonts}
\usepackage{mathrsfs}
\usepackage{tikz}
\usepackage[all,pdf]{xy}
\usetikzlibrary{positioning,calc,arrows.meta,shapes.geometric,fit}
\usepackage{dsfont}

\newcommand{\prop}{P}
\newcommand{\agents}{N}

\newcommand{\hc}{\hat{C}}
\newcommand{\rcb}{R^*} 
\newcommand{\cl}{\mathit{cl}}
\newcommand{\bisim}{\leftrightarrows}

\newcommand{\axiom}[1]{\ensuremath{#1}}
\newcommand{\sys}{\ensuremath{\mathrm{CB}_n}}
\newcommand{\sysTwo}{\ensuremath{\mathrm{CB}_2}}

\newcommand{\mc}[1]{\ensuremath{\mathit{#1}}} 
\newcommand{\lang}{\mathcal{L}_C}

\newcommand{\putaway}[1]{}

\newtheorem{definition}{Definition}
\newtheorem{lemma}{Lemma}
\newtheorem{theorem}{Theorem}

\title{Common Belief Revisited\footnote{This paper is identical to the version appearing in the Festscrift in honour of Andreas Herzig's 65th birthday, published by College Publications (2025). Many thanks to Hans van Ditmarsch for significant input on this work. I also thank the anonymous reviewers for helpful comments.}
}
\author{Thomas {\AA}gotnes\\University of Bergen and Shanxi University}
\date{}

\begin{document}

\maketitle

\begin{abstract}
    Contrary to common belief, common belief is not KD4. 
    If individual belief is KD45, common belief does indeed lose the 5 property and keep the D and 4 properties -- and it has none of the other commonly considered properties of knowledge and belief. But it has another property\footnote{This was pointed out to me by Hans van Ditmarsch who knew it from Andreas Herzig who knew it from Giacomo Bonanno.}: $C(C\phi \rightarrow \phi)$ -- corresponding to so-called shift-reflexivity (reflexivity one step ahead). This observation begs the question\footnote{First raised, to me at least, by Andreas Herzig. This is typical: Andreas is a genuinely curious researcher who has a gift for asking the right questions and is generous with sharing his ideas. No wonder he is one of the most influential researchers in the area of (multi-)agent logic in general and epistemic logic in particular. Andreas has conjectured (personal communication) that the answer to the question is ``yes''. He has also referred to it as (to him) ``the most important open problem in epistemic logic'' (personal communication, Hans van Ditmarsch). I am happy to be able to settle this problem in this paper, in honour of Andreas' 65th birthday. I know that he would be absolutely delighted if the answer is ``yes'' and the conjecture is correct. I also know that he would be even more delighted if the answer is ``no''.
    }:   
    is KD4 extended with this axiom a complete characterisation of common belief in the KD45 case? If not, what \emph{is} the logic of common belief?  In this paper we show that the answer to the first question is ``no'': there is one additional axiom, and, furthermore, it relies on the number of agents. We show that the result is a complete characterisation of common belief, settling the open problem.
\end{abstract}

\section{Introduction}

In standard (modal) logics of knowledge and belief \cite{Fagin:1995hc,Meyer1995,vDvdHK2007}, \emph{common} knowledge and belief \cite{lewis1969convention} are defined by taking the transitive closure of the union of the accessibility relations for the individual agents. It is well known that if the latter are equivalence relations, so is the former -- when individual knowledge has the S5 properties then so does common knowledge. What about weaker notions of belief? The most commonly used model of belief is KD45. 
It is also well known that common belief in that case ``inherits'' the D and 4 properties, but not the negative introspection property 5. 
Indeed, among the most commonly considered properties, D and 4 (in addition to the standard properties of normal modalities) are in a certain sense the only properties of common belief over KD45 \cite{aagotnes2021group}. 

That doesn't necessarily mean that common belief on KD45 \emph{is} KD4, and that there are not \emph{other} properties. And indeed there are. The formula
\begin{equation}   
\tag{\axiom{Cc}}
C (C\phi \rightarrow \phi)
\end{equation}
(where $C\phi$ means that $\phi$ is common belief by the grand coalition of all agents) is valid on KD45\footnote{This observation is attributed to Giacomo Bonanno (personal communication, Andreas Herzig).}.
To see this, observe that Euclidicity implies shift-reflexivity\footnote{A relation $R$ is shift-reflexive iff $Rxy$ implies $Ryy$ for all $x$ and $y$.}, and thus individual Eucludicity ensures that the common belief relation is reflexive in any state that is accessible by any agent from any other state.

This again begs the question: are there any other properties, or is KD4Cc a complete characterisation of common belief? 

Consider the case that there are only two agents, and a formula of the form:
\begin{equation}
    \label{eq:c3}
    \tag{$\hc 2$}
    (\hc\phi_1 \wedge \hc\phi_2 \wedge \hc \phi_3) \rightarrow \hc((\hc \phi_1 \wedge \hc\phi_2) \vee (\hc \phi_1 \wedge \hc\phi_3) \vee (\hc \phi_2 \wedge \hc\phi_3))
\end{equation}
where $\hc \phi = \neg C\neg \phi$. It is not too hard to see that this formula is valid: since there are only two agents, the first step on two of the paths to the three formulas must be for the same agent, and by individual Euclidicity there is a path to two of those formulas after one step.

This means, first, that KD4Cc is not a complete characterisation (it is not too hard to see that $\hc 2$ cannot be derived). Second, observe that $\hc 2$ is \emph{not} valid if there are three or more agents, so that means that a complete characterisation would be different for different numbers of agents.

In this paper we show that \emph{that's it}: KD4 + \axiom{Cc} + \axiom{\hc 2} is a sound and complete characterisation of common belief over KD45 for the language where the only modality is a common belief operator for the grand coalition, in the case of two agents -- and similarly for any number of agents. 

Existing axiomatisations of common (knowledge and) belief are for languages that also have individual belief modalities, and common belief is characterised in terms of individual belief by fixed-point axioms\footnote{See also \cite{herzig2020axiomatisation} for an intuitively elegant alternative axiom in terms of \emph{knowing-whether}, that works only for logics with the T axiom.} \cite{lehmann1984knowledge,bonanno1996logic,lismont1994logic} like (where $K_i$ is the individual knowledge modality for agent $i$ and $\agents$ is the set of all agents)
\[C(\phi \rightarrow \bigwedge_{i \in \agents}K_i\phi) \rightarrow (\bigwedge_{i \in \agents}K_i\phi \rightarrow C\phi)\]
(sometimes an induction \emph{rule} is used instead \cite{Fagin:1995hc}). While this, together with axioms describing the properties of individual belief, indirectly gives us a precise and complete characterisation of common belief, it obfuscates the properties of common belief since they are entangled with individual belief in the description. By leaving individual belief out of the picture on the syntactic level and having only a single modality, for common belief of the grand coalition, in the language, in this paper we get a direct and explicit complete characterisation of the core properties of common belief.

The rest of the paper is organised as follows. In the next section we formally define the language and the semantics, and the axiomatisations are presented and shown to be sound in Section \ref{sec:ax}. The main result, completeness, is shown in Section \ref{sec:completeness}. We briefly discuss taking the \emph{reflexive} transitive closure instead of the transitive closure in Section \ref{sec:reflexive}, and conclude in Section \ref{sec:conclusions}.

\section{Language and Semantics}

The language $\lang$ is defined as follows, given a set of atomic propositions $\prop$.
\[\phi ::= p \mid \neg \phi \mid \phi \wedge \phi \mid C \phi\]
where $p \in \prop$. We write $\hc \phi$ for $\neg C\neg \phi$, in addition to using the usual derived propositional connectives.

The language is interpreted in multi-agent Kripke models $M = (W,R,V)$ over $\prop$ and a finite set of agents $\agents$ (without loss of generality we assume that $N = \{1,\ldots, n\}$):
\begin{itemize}
    \item $W$ is a non-empty set of states;
    \item $R_i \subseteq W\times W$ is an accessibility relation for each agent $i \in \agents$;
    \item $V:\prop \rightarrow W$ is a valuation function.
\end{itemize}
A KD45 model is a model where each accessibility relation is serial, transitive and Euclidian. The class of all KD45 models for $n$ agents ($|N| = n$) is denoted \mc{KD45_n}.

Let
\[\rcb = (\bigcup_{i \in N} R_i)^*\]
where $Q^*$ is the transitive closure of the binary relation $Q$ on $W$.

The language is interpreted in these models as follows:

\[\begin{array}{lcl}
M,s \models p & \Leftrightarrow& s \in V(p)\\
M,s \models \neg\phi & \Leftrightarrow& M,s \not\models \phi\\
M,s \models \phi \wedge \psi & \Leftrightarrow& M,s \models \phi \text{ and } M,s \models \psi\\
M,s \models C \phi & \Leftrightarrow& \forall t \in W: \rcb st \Rightarrow M,t \models \phi
\end{array}\]

\section{Axiomatisation}
\label{sec:ax}

As discussed in the introduction, common belief in the KD45 case has the 4 and D properties but (as in most cases with individual negative introspection with the exception of S5 \cite{aagotnes2021group}) loses the 5 property. A simple example illustrating the latter is shown in Figure \ref{fig:simple}.

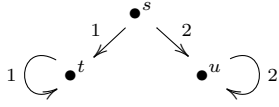
\begin{figure}
    \centering
\[\xymatrix@R=1em@C=1em{
&\bullet^s\ar@{->}[dl]_1\ar@{->}[dr]^2\\
\bullet^t\ar@(ul,dl)[]_{1}&& \bullet^u\ar@(ur,dr)[]^{2}
}\]
    \caption{Simple KD45 model.}
    \label{fig:simple}
\end{figure}

As mentioned in the introduction, we also get the additional property
\begin{equation}
    \tag{\axiom{Cc}} 
    C(C\phi\rightarrow \phi)
\end{equation}
This is perhaps easiest seen by observing that in a KD45 model the $R^*$ relation is shift-reflexive, since any state that is accessible from some other state by at least one agent has a reflexive loop for that agent due to individual Euclidicity. In particular, if a formula $\phi$ is satisfied in some state $s$ of a model then every state with the possible exception of the initial state $s$ in the generated submodel of that model from $s$ is reflexive. Of course, this property is not ``new'' -- as just argued it is implied by (is a sub-property of) Euclidicity. So, we don't lose 5 \emph{completely}, only \emph{partially}.

The second ``new'' class of properties we get are the following:
\begin{equation}
    \tag{\axiom{Cn}} 
C \left(\bigwedge_{1 \leq i < j \leq n+1} (C\phi_i \vee C\phi_j)\right) \rightarrow \bigvee_{1 \leq i \leq n+1} C\phi_i
\end{equation}
where $n$ is the number of agents. It might be instructive to observe that the following is equivalent:
\begin{equation}
    \tag{$\hc n$} 
\bigwedge_{1 \leq i \leq n+1} \hc \phi_i \rightarrow \hc \bigvee_{1 \leq i < j \leq n+1}(\hc \phi_i \wedge \hc \phi_j)
\end{equation}

\begin{lemma}
    \label{lemma:canonicity}
    \axiom{Cn} is canonical for the (FOL) property
    \[\forall x y_1\ldots y_{n+1} \left((\bigwedge_{1 \leq i \leq n+1} Rxy_i) \rightarrow \exists z \left(Rxz \wedge \bigvee_{1 \leq i < j \leq n+1} (Rzy_i \wedge Rzy_{j})\right)\right)\]
\end{lemma}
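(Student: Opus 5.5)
We show that the canonical relation $R^c$ of any normal modal logic $\Lambda$ containing \axiom{Cn} satisfies the first-order property in the statement. Here $R^c$ is defined on maximal consistent sets by $R^c\Gamma\Delta$ iff $\{\psi : C\psi \in \Gamma\} \subseteq \Delta$, equivalently iff $\hc\psi \in \Gamma$ for all $\psi \in \Delta$.

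Fix $\Gamma, \Delta_1,\ldots,\Delta_{n+1}$ with $R^c\Gamma\Delta_i$ for all $i$. We must construct a maximal consistent $\Theta$ with $R^c\Gamma\Theta$ and, for some pair $i<j$, both $R^c\Theta\Delta_i$ and $R^c\Theta\Delta_j$. The last two conditions say that $\hc\delta \in \Theta$ for all $\delta \in \Delta_i \cup \Delta_j$. So it suffices to find $i<j$ such that
\[\Sigma_{ij} = \{\psi : C\psi\in\Gamma\} \cup \{\hc\delta : \delta \in \Delta_i\} \cup \{\hc\delta : \delta\in\Delta_j\}\]
is consistent; Lindenbaum's lemma then extends it to the required $\Theta$.

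Suppose, towards a contradiction, that every $\Sigma_{ij}$ is inconsistent. Since each $\Delta_k$ is closed under conjunction and $\hc$ is monotone, finitely many witnesses from $\Delta_k$ can be merged into a single formula $\delta_k\in\Delta_k$ with $\hc\delta_k \to \hc\delta$ for each witness $\delta$. Taking one $\delta_k$ per $k$ that serves all pairs involving $k$ (conjoin the witnesses for all pairs; there are finitely many), inconsistency of $\Sigma_{ij}$ yields some $\psi_{ij}$ with $C\psi_{ij}\in\Gamma$ and $\vdash \psi_{ij} \to \neg(\hc\delta_i\wedge\hc\delta_j)$. Let $\psi=\bigwedge_{i<j}\psi_{ij}$. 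Then $C\psi\in\Gamma$ because $C$ is normal, and by necessitation and K we get $C\neg\bigvee_{i<j}(\hc\delta_i\wedge\hc\delta_j)\in\Gamma$, that is, $\neg\hc\bigvee_{i<j}(\hc\delta_i\wedge\hc\delta_j)\in\Gamma$. On the other hand, $R^c\Gamma\Delta_k$ and $\delta_k\in\Delta_k$ give $\hc\delta_k\in\Gamma$ for every $k$. The instance of $\hc n$ with $\phi_k:=\delta_k$ then puts $\hc\bigvee_{i<j}(\hc\delta_i\wedge\hc\delta_j)$ in $\Gamma$, which is a contradiction.

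The equivalence of \axiom{Cn} and $\hc n$ is a routine dualisation. Substitute $\neg\phi_i$ into \axiom{Cn} and contrapose. Then $\neg\bigvee_i C\neg\phi_i$ becomes $\bigwedge_i\hc\phi_i$, and $\neg C\bigwedge_{i<j}(C\neg\phi_i\vee C\neg\phi_j)$ becomes $\hc\bigvee_{i<j}(\hc\phi_i\wedge\hc\phi_j)$. So we may use either form. The only delicate step is the uniform choice of a single witness $\delta_k$ for each $k$ across all pairs; it works because we take finite conjunctions inside each $\Delta_k$ and because $\hc$ is monotone. The fact that \axiom{Cn} holds on frames with this property is soundness, which is handled separately and is not part of canonicity.
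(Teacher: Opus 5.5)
Your proof is correct, but it takes a different route from the paper. The paper simply cites Sahlqvist theory: $\hc n$ (with proposition letters for the $\phi_i$) is a very simple Sahlqvist formula, and the displayed condition is its first-order correspondent. Since Sahlqvist formulas are canonical, the canonical frame validates $\hc n$ and hence satisfies the condition.

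You instead verify the condition directly in the canonical frame. Your argument generalises the standard direct canonicity proof for the density axiom to $n+1$ targets and a disjunction over pairs. The existence of $\Theta$ reduces to consistency of one of the finitely many sets $\Sigma_{ij}$. The key step is choosing a single witness $\delta_k\in\Delta_k$ that serves every pair containing $k$, using closure of $\Delta_k$ under conjunction and monotonicity of $\hc$. That is exactly what lets one instance of $\hc n$ produce the contradiction.

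The paper's route is a two-line appeal to heavy general theorems. It also yields the full frame correspondence (that $\hc n$ defines exactly this class of uni-modal frames), though it leaves unchecked that the algorithm's output is equivalent to the displayed formula. Yours is self-contained and proves precisely the fact invoked later when $X^{(\cl,w)}_{i+1}$ is defined. It also shows that only \axiom{K} and \axiom{Cn} are used, so the property holds in the canonical frame of every normal logic containing \axiom{Cn}.

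One small terminological slip: the converse you set aside at the end (that \axiom{Cn} is valid on every frame with this property) is the other half of the first-order correspondence. It is not the paper's soundness lemma, which concerns $\rcb$ on \mc{KD45_n} models. As you say, though, the completeness argument does not need it.
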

\begin{proof}
    \axiom{\hc n} is a (``very simple'') Sahlqvist formula, and it can be shown that the property in the lemma is its first-order correspondent (see \cite[Theorem 3.42]{Blackburn2001}). Thus, the formula is canonical for that property (see \cite[Theorem 4.42]{Blackburn2001}).
\end{proof}

In other words, this property says that if I can see $n+1$ states, then I can see a state that can see two of them. Again, this property is not ``new'' -- it is implied by the \axiom{5} axiom (Euclidicity, in which case all of those $n+1$ states can see each other).

Thus, instead of just dropping the 5 axiom we replace it with two weaker axioms: \axiom{Cn} and \axiom{Cc}. By adding those axioms to KD4, we get the axiomatisation $\sys$ shown in Table \ref{tab:ax}.

\begin{table}[t!]
    \centering
    \begin{tabular}{ll}\toprule
        all instances of propositional tautologies& \axiom{Prop}\\
        $C (\phi \rightarrow \psi) \rightarrow (C \phi \rightarrow C \psi)$& \axiom{K}\\
        $C\phi \rightarrow \neg C \neg \phi$& \axiom{D}\\
        $C\phi \rightarrow CC\phi$& \axiom{4}\\
        \midrule 
        $C(C\phi \rightarrow \phi$)& \axiom{Cc}\\
        $(C \bigwedge_{1 \leq i < j \leq n+1} (C\phi_i \vee C\phi_j)) \rightarrow \bigvee_{1 \leq i \leq n+1} C\phi_i$& \axiom{Cn}\\
        \midrule
        From $\phi \rightarrow \psi$ and $\phi$, derive $\psi$& \axiom{MP}\\
        From $\phi$, derive $C \phi$& \axiom{Nec}\\
    \bottomrule
    \end{tabular}
    \caption{Axiomatisation $\sys$ over the language $\lang$.}
    \label{tab:ax}
\end{table}

\begin{lemma}
    For any $n \geq 2$, $\sys$ is sound wrt. the class of \mc{KD45_n} models.
\end{lemma}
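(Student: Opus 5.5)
Soundness is checked axiom by axiom, since the rules clearly preserve validity. Fix a model $M=(W,R,V)$ in \mc{KD45_n} and let $\rcb$ be the transitive closure of $\bigcup_{i\in N}R_i$. The axiom \axiom{K}, the rules \axiom{MP} and \axiom{Nec}, and propositional tautologies are valid because $C$ is interpreted as a box modality over the relation $\rcb$.

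The first step is the structural properties of $\rcb$ that justify \axiom{D}, \axiom{4} and \axiom{Cc}.
\begin{itemize}
\item $\rcb$ is serial, because any single $R_i$ is serial, and this gives \axiom{D}.
\item $\rcb$ is transitive by construction, which gives \axiom{4}.
\item $\rcb$ is shift-reflexive. Suppose $\rcb xy$. Then the last step of a witnessing path is $R_iwy$ for some $i$ and some $w$. Euclidicity of $R_i$ applied to $R_iwy$ twice gives $R_iyy$, hence $\rcb yy$. Shift-reflexivity validates \axiom{Cc}: if $\rcb xy$ and $M,y\models C\phi$, then $\rcb yy$ gives $M,y\models\phi$.
\end{itemize}

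The main step is \axiom{Cn}. I would verify it in the equivalent dual form \axiom{\hc n}, or directly show that $\rcb$ satisfies the first-order property of Lemma~\ref{lemma:canonicity}; correspondence for this Sahlqvist formula then gives validity. Semantically, suppose $M,x\models\hc\phi_k$ for $k=1,\dots,n+1$, witnessed by states $y_k$ with $\rcb xy_k$ and $M,y_k\models\phi_k$.

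\begin{itemize}
\item Fix a path from $x$ to each $y_k$. Each path starts with some step $R_{i_k}xw_k$.
\item There are $n+1$ paths and only $n$ agents, so by pigeonhole two indices $k\neq l$ have $i_k=i_l=i$.
\item Take $z=w_k$, so $\rcb xz$. By Euclidicity of $R_i$, from $R_ixw_k$ and $R_ixw_l$ we get $R_iw_kw_l$.
\item Hence $\rcb z w_l$, and the remainder of the path to $y_l$ gives $\rcb zy_l$. This uses transitivity of $\rcb$, with a trivial case when $w_l=y_l$.
\item For $y_k$: if $w_k\neq y_k$, the rest of the path gives $\rcb zy_k$. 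If $w_k=y_k$, we need $\rcb y_ky_k$, which holds by shift-reflexivity since $\rcb xy_k$.
\end{itemize}

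So $M,z\models\hc\phi_k\wedge\hc\phi_l$, and therefore $M,x\models\hc\bigvee_{k<l}(\hc\phi_k\wedge\hc\phi_l)$.

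The main obstacle is the degenerate case where the first step lands exactly on a witness, so the path has length one. Shift-reflexivity handles it, because it supplies the required loop at $y_k$. Finally, I would note that \axiom{Cn} is interderivable with \axiom{\hc n} by duality and propositional reasoning, so validity of one transfers to the other.
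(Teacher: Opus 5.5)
Your proof is correct and follows essentially the same route as the paper. You get \axiom{Cc} from shift-reflexivity of $\rcb$, which follows from individual Euclidicity, and \axiom{\hc n} from a pigeonhole argument on the first steps of the $n+1$ witnessing paths combined with Euclidicity of the shared agent's relation. One small improvement over the paper: you explicitly handle the case where a witnessing path has length one, using $\rcb y_k y_k$, a detail the paper's proof leaves implicit.
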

\begin{proof}
    Validity (preservation) for \axiom{K} and \axiom{Nec} follow from the fact that $C$ is normal (has standard relational semantics). Validity of \axiom{D} and \axiom{4} is well known \cite{aagotnes2021group}. 

    For \axiom{Cc}, for any state $w$ and agent $j$, by Euclidicity of $R_j$, if $R_jwv$ then $R_jvv$. Then also $R^*vv$. Thus, if $M,v \models C\phi$ then $M,v \models \phi$.

    For \axiom{Cn}, we use the equivalent \axiom{\hc n}. Let $t_1, \ldots, t_{n+1}$ be such that $(s,t_i) \in \rcb$ and $M,t_i \models \phi_i$, for each $1 \leq i \leq n+1$. In other words, for each $i$ there is an $\rcb$-path $t_i^0t_i^1\cdots t_i^{n_i}$, such that $t_i^0 = s$ and $t_i^{n_i} = t_i$. Since $t_i^0 \rcb t_i^1$ for each $1 \leq i \leq n+1$ and there are only $n$ agents, two of those first steps on those $n+1$ paths must be for the same agent $j$: $s R_j t_j^1$ and $s R_j t_k^1$ for some $j \neq k$. By Euclidicy of $R_j$ then we also have that $t_j^1 R_j t_k^1$. Then $M,t_j^1 \models \hc \phi_j \wedge \hc \phi_k$, and $M,s \models \hc(\hc \phi_j \wedge \hc \phi_k)$.
\end{proof}

Note that \axiom{Cn} does not hold for the case of $m > n$ agents. For example, \axiom{C2} holds in the case of two agents but not in the case of three. We thus get different systems $\sys$ for different numbers of agents.

These two additional axioms are all we need, as we now show.

\section{Completeness}
\label{sec:completeness}

We construct a satisfying \mc{KD45_n} model for any $\sys$-consistent formula. Two be able to focus on the key ideas we give the proof in full detail for the case $n=2$; in Section \ref{sec:n} we describe how it is generalised. Thus, henceforth assume that $n=2$.

The main ideas are as follows, with some pointers to the technical details that follow:
\begin{itemize}
    \item We take the standard canonical uni-modal model for $\sys$ (with a single relation interpreting the $C$ modality) as the starting point (Def. \ref{def:cm} below).
    \item We need a model where the transitions in the relation are ``labeled'' by agent names. However, we cannot just label all the transitions in the canonical model by some agent -- some of them correspond not to single agents but to sequences of agents due to transitivity of the common belief relation. 
    \item Also, if we label two outgoing transitions from the same state by the same agent, we need to make sure that the two incoming states are also related, due to individual Euclidicity.
    \item Key idea number one: if a state (1) is reflexive and (2) has at most one other incoming transition already labeled by an agent name, say agent 1, we can do the following: if the state has $k$\footnote{This also works when the number of outgoing transitions is not finite.} outgoing transitions, split it into $k$ copies with universal access between them for agent 1, each with one outgoing transition for agent 2. This does not affect satisfaction of any $\lang$ formulas; in fact the resulting model will be bisimilar when we take the transitive closure of union of the new accessibility relations (Lemma \ref{lemma:bisim} below).
    \item If we take the generated submodel of the canonical uni-modal model, then \emph{all reachable} states will be reflexive, taking care of condition (1).
    \item .. and we can also make it into a tree-like model in (pretty much) the standard way, taking care of condition (2).
    \item This transformation can be done recursively, for each state, possibly except the initial state which might not be reflexive, by \emph{alternating the agent names}: states with ingoing transitions for agent 2 get split into an agent 2 cluster.
    \item The transformation takes care of individual Euclidicity, since each new state only has one outgoing transition in addition to the reflexive loop. It also takes care of individual transitivity by alternating between agents.
    \item Key idea number two: this leaves us with the initial state, and this is where the \axiom{C2} axiom is needed. The initial state might have infinitely many directly accessible states, but to satisfy a (finite) formula we only need finitely many of them, at most corresponding to each subformula $\hc \psi$ we need to satisfy. For each triple of those states, the \axiom{C2} axiom ensures that the initial state can access some state that can access two of them (possibly not among the already identified states). That state then acts as a ``proxy'' for those two states since they can be reached by transtitive closure, and therefore we no longer need to be able to access those two states directly by the relation for some agent. We can thus replace those two states with the new one, and by repeating the process we can get down to at most two states that can access all the other needed states which thus can be reached through transitive closure. The transitions from the initial state to those two states can each be labeled with one of the two agents. (The set $X^\cl_w$ defined below, where $\cl$ gives us the set of $\hc \psi$ subformulas and $w$ is the initial state, is the set of those two states).
    \item The model construction proceeds recursively, building up a tree-like model level by level. Start with the initial node $w$ and its two successors identified above; make one copy of those successors for each outgoing transition (in the canonical model) each receiving an ingoing transition from $w$ for the same agent; add universal access for the same agent inside the cluster; repeat for the outgoing transitions in the new nodes but for the other agent. The construction is illustrated in Figure \ref{fig:construction}.
\end{itemize}

\begin{figure}
    \centering
\[\xymatrix{
Y_0:&&&&w
\ar@{->}!<0ex,0ex>;[dll]!<0ex,-1ex>
\ar@{->}!<0ex,0ex>;[dll]!<0ex,0ex>
\ar@{->}!<0ex,0ex>;[dll]!<0ex,1ex>_1
\ar@{->}!<0ex,0ex>;[drr]!<0ex,-1ex>
\ar@{->}!<0ex,0ex>;[drr]!<0ex,0ex>
\ar@{->}!<0ex,0ex>;[drr]!<0ex,1ex>
\ar@{->}!<0ex,0ex>;[drr]!<0ex,2ex>^2
\\
Y_1:&&\ldots
\ar@{->}!<0ex,0ex>;[dl]!<0ex,-1ex>
\ar@{->}!<0ex,0ex>;[dl]!<0ex,0ex>
\ar@{->}!<0ex,0ex>;[dl]!<0ex,1ex>_2
\ar@{->}!<0ex,0ex>;[dr]!<0ex,-1ex>
\ar@{->}!<0ex,0ex>;[dr]!<0ex,0ex>^2
&&&&\ldots
\ar@{->}!<0ex,0ex>;[d]!<-1ex,0ex>
\ar@{->}!<0ex,0ex>;[d]!<0ex,0ex>
\ar@{->}!<0ex,0ex>;[d]!<1ex,0ex>^1
\ar@{->}!<0ex,0ex>;[dl]!<-1ex,0ex>_1
\ar@{->}!<0ex,0ex>;[dl]!<0ex,0ex>
\ar@{->}!<0ex,0ex>;[dl]!<1ex,0ex>
\\
Y_2:&\ldots
\ar@{->}!<0ex,0ex>;[dr]!<0ex,-1ex>
\ar@{->}!<0ex,0ex>;[dr]!<0ex,0ex>
\ar@{->}!<0ex,0ex>;[dr]!<0ex,1ex>
\ar@{->}!<0ex,0ex>;[dr]!<0ex,2ex>^1
&&\ldots
\ar@{->}!<0ex,0ex>;[d]!<-1ex,0ex>
\ar@{->}!<0ex,0ex>;[d]!<0ex,0ex>
\ar@{->}!<0ex,0ex>;[d]!<1ex,0ex>^1
&&\ldots
\ar@{->}!<0ex,0ex>;[d]!<-1ex,0ex>
\ar@{->}!<0ex,0ex>;[d]!<0ex,0ex>
\ar@{->}!<0ex,0ex>;[d]!<1ex,0ex>^2
&\ldots\\
Y_3:&&\ldots&\ldots&&\ldots\\
\vdots
}\]
    \caption{Part of the model construction. Each cluster (denoted $\cdots$) has incoming transitions of only one type, has internal universal access for the same type, and has one outgoing transition of the opposite type for each node in the cluster.}
    \label{fig:construction}
\end{figure}
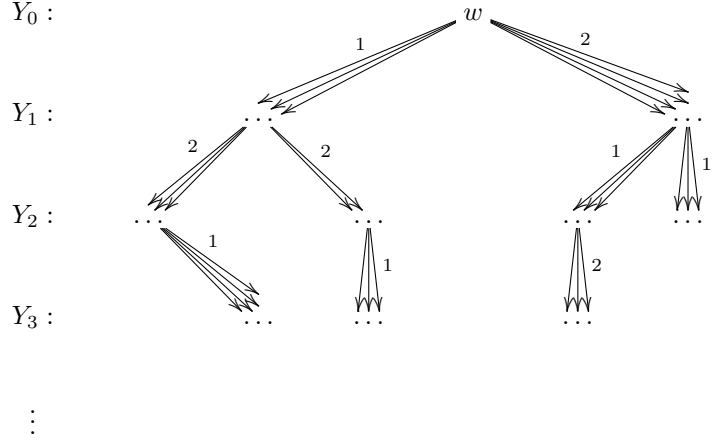

We proceed with the details. It might be helpful to keep an eye on Figure \ref{fig:construction}.

A \emph{uni-modal} model is a structure $M = (W,R,V)$ where $W$ and $V$ are like in a (multi-agent) model, and $R \subseteq W\times W$ is a single accessibility relation. The $\lang$ language is interpreted in uni-modal models by letting $M,s \models C\phi$ iff $M,t \models \phi$ for all $t \in W$ such that $Rst$ (and letting the other clauses be as in the interpretation in a model).

\begin{definition}[Canonical uni-modal model]
    \label{def:cm}
    The canonical uni-modal model $M^c = (W^c,R^c,V^c)$ is defined as follows:
    \begin{itemize}
        \item $W^c$ is the set of all maximal $\sys$-consistent sets;
        \item $R^cwv$ iff for all formulas $\psi$, if $\psi \in v$ then $\hc \psi \in w$;
        \item $V(p) = \{w: p \in W\}$.
    \end{itemize}
\end{definition}

We say that an MCS $\Gamma$ is \emph{branching} if $\Gamma R^c\Delta$ and $\Gamma R^c\Delta'$ for some MCSs $\Delta \neq \Delta'$. We say that a set of formulas $\cl$ is \emph{proper} iff it is finite, is closed under subformulas, contains $\hc \top$, and contains $\hc \neg \psi$ whenever it contains $C\psi$.

Let $w$ be a branching MCS and $\cl$ be proper set of formulas. We now define a finite set $X^\cl_w \subseteq W^c$ of states. We start with recursively defining $X^{(\cl,w)}_i$ for each natural number $i$. 

\begin{description}
    \item[$X^{(\cl,w)}_0$] Let $\psi_0, \ldots, \psi_k$ be the (finitely many) different formulas of the form $\hc \chi$ in $\cl$. For each $\psi_i = \hc\chi_i$, if $M^c,w \models \hc \chi_i$ let $u_i \in W^c$ be such that $R^cwu_i$ and $M^c,u_i \models \chi_i$. Finally let $X^{(\cl,w)}_0 = \{u_0, \ldots, u_k\}$.
    \item[$X^{(\cl,w)}_{i+1}$] If $X^{(\cl,w)}_i$ contains less than three nodes we are done, and let $X^{(\cl,w)}_{i+1} = X^{(\cl,w)}_i$. Otherwise, let $t,u,v$ be three (pair-wise) different nodes in $X^{(\cl,w)}_i$. By Lemma \ref{lemma:canonicity} (with $x = w$) there is a $z \in W^c$ such that $R^cwz$ that can see two of those three states. Without loss of generality assume that those are $t$ and $u$, i.e., that $R^czt$ and $R^czu$. Let $X^{(\cl,w)}_{i+1} = (X^{(\cl,w)}_i \setminus \{t,u\}) \cup \{z\}$.
\end{description}

\begin{lemma}
    For some $i\geq 0$, $X^{(\cl,w)}_{i+1} = X^{(\cl,w)}_i$.
\end{lemma}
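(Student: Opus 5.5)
The plan is to run a termination argument on the cardinality of the sets $X^{(\cl,w)}_i$. We show that $|X^{(\cl,w)}_i|$ strictly decreases at every step at which the recursion does not stop. Since cardinalities are natural numbers, the set must stabilise after finitely many steps.

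First, $X^{(\cl,w)}_0$ is finite. It contains at most one chosen witness $u_i$ for each formula $\hc\chi_i \in \cl$, and $\cl$ is finite because it is proper. Hence $|X^{(\cl,w)}_0| \le k+1$ for some natural number $k$.

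Next, consider the inductive step. If $|X^{(\cl,w)}_i| < 3$, then by definition $X^{(\cl,w)}_{i+1} = X^{(\cl,w)}_i$, which is exactly what the lemma requires. Otherwise there are three pairwise different nodes $t,u,v$, and Lemma \ref{lemma:canonicity} supplies $z$ with $R^cwz$, $R^czt$ and $R^czu$ (after renaming). Then $X^{(\cl,w)}_{i+1} = (X^{(\cl,w)}_i\setminus\{t,u\})\cup\{z\}$. Since $t \neq u$ and both belong to $X^{(\cl,w)}_i$, removing them lowers the cardinality by exactly $2$. Adding $z$ raises it by at most $1$; it adds nothing if $z$ already lies in the remaining set, for instance if $z = v$. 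So $|X^{(\cl,w)}_{i+1}| \le |X^{(\cl,w)}_i| - 1$.

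Combining the two steps: as long as the set has at least three elements, each step decreases its size by at least one. Therefore after at most $|X^{(\cl,w)}_0|$ steps we reach some $i$ with $|X^{(\cl,w)}_i| \le 2$, and at that $i$ the definition gives $X^{(\cl,w)}_{i+1} = X^{(\cl,w)}_i$. If the size is already at most $2$ at $i = 0$, the statement holds immediately with $i = 0$.

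The only delicate point is to make sure the step is well defined, namely that the $z$ exists as an actual member of $W^c$. This is supplied by Lemma \ref{lemma:canonicity}: canonicity means $M^c$ satisfies the first-order property, instantiated with $n = 2$, $x = w$ and $y_1,y_2,y_3 = t,u,v$. All three are $R^c$-successors of $w$, since every element of every $X^{(\cl,w)}_i$ is. For $X^{(\cl,w)}_0$ this holds by the choice of the $u_i$, and each newly added $z$ satisfies $R^cwz$, so the invariant is preserved by a trivial induction. The choice of the pair $t,u$ among the three may be made arbitrarily, for example via a fixed choice function. This does not affect termination.
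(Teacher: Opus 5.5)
Your proof is correct and follows the same approach as the paper. Both arguments rest on the finiteness of $X^{(\cl,w)}_0$ and on the observation that each non-terminal step removes two distinct elements and adds at most one, so the cardinality strictly decreases until it drops below three. Your extra check that every element of each $X^{(\cl,w)}_i$ remains an $R^c$-successor of $w$, so that Lemma~\ref{lemma:canonicity} really applies at each step, is a worthwhile detail the paper leaves implicit.
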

\begin{proof}
    $X^{(\cl,w)}_0$ is finite by definition. $X^{(\cl,w)}_{i+1} = X^{(\cl,w)}_i \setminus \{t,u\} \cup \{z\}$ has at least one state less than $X^{(\cl,w)}_i$ (two if $z$ is already in $X^{(\cl,w)}_i$). 
\end{proof}
Thus, let $X^{(\cl,w)} = X^{(\cl,w)}_i$, where $i$ is the lowest number such that $X^{(\cl,w)}_{i+1} =  X^{(\cl,w)}_i$.

\begin{lemma}
    \label{lemma:l1}
    ~
    \begin{enumerate}
        \item $|X^{(\cl,w)}| > 0$
        \item $|X^{(\cl,w)}| < 3$
        \item for any $t \in X^{(\cl,w)}_0$ there is a $u \in X^{(\cl,w)}$ such that $R^cut$
    \end{enumerate}
\end{lemma}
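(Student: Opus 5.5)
The plan is to prove each item by induction on the stage $i$ of the construction of the sets $X^{(\cl,w)}_i$, using transitivity of $R^c$ (canonicity of axiom \axiom{4}).

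\emph{Item 1.} First show that $X^{(\cl,w)}_0$ is non-empty. Since $\cl$ is proper, it contains $\hc\top$. Because $\vdash \top$, \axiom{Nec} gives $C\top \in w$, and then \axiom{D} gives $\hc\top \in w$. By the truth lemma for the canonical uni-modal model, $M^c,w\models \hc\top$, so a witness $u$ is chosen and $X^{(\cl,w)}_0 \neq \emptyset$. Next, a step $X^{(\cl,w)}_{i+1} = (X^{(\cl,w)}_i\setminus\{t,u\})\cup\{z\}$ is only performed when $|X^{(\cl,w)}_i|\geq 3$. Such a step removes two elements and adds $z$, so the result has at least $2$ elements. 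Otherwise the set is unchanged. By induction every $X^{(\cl,w)}_i$ is non-empty, and in particular $X^{(\cl,w)}$ is.

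\emph{Item 2.} $X^{(\cl,w)}$ is $X^{(\cl,w)}_i$ for the least $i$ with $X^{(\cl,w)}_{i+1}=X^{(\cl,w)}_i$. If $|X^{(\cl,w)}_i|\geq 3$, then the construction replaces three distinct nodes and strictly decreases the cardinality, as shown in the preceding lemma. That would give $X^{(\cl,w)}_{i+1}\neq X^{(\cl,w)}_i$, a contradiction. Hence $|X^{(\cl,w)}|<3$.

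\emph{Item 3.} This is the main content. Because the chosen witnesses need not be reflexive, I would strengthen the claim to the following invariant, proved by induction on $j$:

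\begin{quote}
for every $t\in X^{(\cl,w)}_0$ there is $u\in X^{(\cl,w)}_j$ with either $u=t$ or $R^cut$.
\end{quote}

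For $j=0$, take $u=t$. For the inductive step, suppose $u\in X^{(\cl,w)}_j$ witnesses $t$. If $u$ survives into $X^{(\cl,w)}_{j+1}$, it still witnesses $t$. Otherwise $u$ was one of the two removed nodes, so $R^czu$ for the new node $z\in X^{(\cl,w)}_{j+1}$. If $u=t$, then $R^czt$ directly. If $R^cut$, then $R^czt$ by transitivity of $R^c$. Transitivity of $R^c$ holds because \axiom{4} is canonical for transitivity (Sahlqvist).

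To turn $u=t$ into $R^cut$, I use that every node of $X^{(\cl,w)}_0$ is an $R^c$-successor of $w$ and is therefore reflexive by axiom \axiom{Cc}. Indeed, $C(C\phi\to\phi)\in w$ and $R^cwt$ give $C\phi\to\phi\in t$ for all $\phi$. So if $\phi\in t$ then $\hc\phi\in t$, which by the definition of $R^c$ means $R^ctt$. Hence $u=t$ already implies $R^cut$, and the invariant for $j=i$ yields item 3.

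The main obstacle is exactly this case where a node is never replaced. Without the appeal to shift-reflexivity from \axiom{Cc}, item 3 would fail for an irreflexive witness. Once reflexivity of the successors of $w$ is established, the rest is bookkeeping with transitivity.
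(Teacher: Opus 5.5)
Your proof is correct and follows the paper's argument: \axiom{D} gives non-emptiness, the strict cardinality decrease gives item 2, and item 3 is an induction combining reflexivity of $R^c$-successors of $w$ (from \axiom{Cc}) with transitivity of $R^c$ (from \axiom{4}); the only difference is that you use the invariant ``$u=t$ or $R^cut$'' and apply reflexivity at the end, whereas the paper uses $R^ctt$ directly in the base case. One small slip in item 1: a replacement step need not leave at least two elements, since if $z$ already lies in $X^{(\cl,w)}_i$ (e.g.\ $X^{(\cl,w)}_i=\{t,u,z\}$, which becomes $\{z\}$) the set shrinks by two, but non-emptiness is unaffected because $z\in X^{(\cl,w)}_{i+1}$.
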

\begin{proof}
    ~
    \begin{enumerate}
        \item By the \axiom{D} axiom, $\hc \top \in w$, so there is at least one state $u$ such that $R^c wu$. Each step in the elimination process removes at most two states when there is at least three states left.
        \item By definition, $X^{(\cl,w)}_{i+1}$ contains at least one state less than $X^{(\cl,w)}_i$ when the latter has more than two states.
        \item Let $t \in X^{(\cl,w)}_0$. We show that for any $i$, there is a $u \in X^{(\cl,w)}_i$ such that $R^cut$. For the base case let $u = t$: $R^ctt$ due to $R^cwt$ and the \axiom{Cc} axiom. For the inductive case, let $X^{(\cl,w)}_{i+1} = (X^{(\cl,w)}_i \setminus \{t',u'\}) \cup \{z\}$ where $R^cwz$ and $R^czt'$ and $R^czu'$. By the inductive hypothesis there is a $u \in X^{(\cl,w)}_i$ such that $R^cut$. If $u \neq t'$ and $u \neq u'$ then also $u \in X^{(\cl,w)}_{i+1}$. Consider the case that $u = t'$. Since $R^czt'$ and $R^cut$, we have that $R^czt$ by transitivity (axiom \axiom{4}). Thus, let $u = z$.
    \end{enumerate}
\end{proof}

Finally, we define the set $X^\cl_w$ (given $\cl$ and the branching MCS $w$). If $|X^{(\cl,w)}| = 2$, let $X^\cl_w = X^{(\cl,w)}$. Otherwise, let $u \in W^c$ be such that $R^cwu$ and $u \not \in X^{(\cl,w)}$ (it exists since $w$ is branching), and let $X^\cl_w = X^{(\cl,w)} \cup \{u\}$. In either case $|X^\cl_w| = 2$, so henceforth assume that $X^\cl_w = \{x_0,y_0\}$.

Given a proper $\cl$ and a branching MCS $w$, we are now going to build a model $M_{(\cl,w)} = (W,R,V)$.
We first define a set of states $Y_i$ and a set of paths $\Pi_i$, for each natural number $i$, by mutual recursion (keeping in mind that these sets are parameterised by $\cl$ and $w$). The former is in turn defined in terms of sets $Y^\pi_i$ where $\pi \in \Pi_i$ is a path. The empty path is denoted $\epsilon$ (as usual we write $\pi x$ for the concatination of path $\pi$ with state $x$, and when $\pi = \epsilon$ we omit $\pi$ in the concatination and write just $x$ for $\epsilon x$). Any non-empty path\footnote{We slightly abuse the word ``path'' here, as a sequence of symbols representing states. In fact, a path is actually a path in the resulting model, with the exception of the possible initial states $w^1$ or $w^2$ in the path which both represent the root node $w$. We use $w^1$ and $w^2$ to distinguish between left paths and right paths.} starts with either $w^1$ or $w^2$. A path that starts with $w^1$ is called a \emph{left path}; a path that starts with $w^2$ is called a \emph{right path}. The model construction is illustrated in Figure \ref{fig:construction}.
\begin{itemize}
    \item $i = 0$: $Y_0 = Y^\epsilon_0 = \{w\}$. $\Pi_0 = \{\epsilon\}$.
    \item $i = 1$ (recall that $X^\cl_w = \{x_0,y_0\})$:
    \begin{itemize}
        \item $\Pi_1 = \{w^1,w^2\}$
        \item $Y^{w^1}_1 = \{x^{w^1}_z : R^cx_0z\}$ and $Y^{w^2}_1 = \{y^{w^2}_z : R^cy_0z\}$
        \item $Y_1 = Y^{w1}_1 \cup Y^{w2}_1$
    \end{itemize}
    \item $i \geq 2$, $i = j+1$:
    \begin{itemize}
        \item $\Pi_i = \{\pi v : \pi \in \Pi_j, v \in Y_j^\pi\}$
        \item For each $\pi \in \Pi_j$ and $v = t^\pi_s \in Y^\pi_j$, $Y^{\pi v}_i = \{s^{\pi v}_z : R^csz\}$
        \item $Y_i = \bigcup_{\pi \in \Pi_i} Y^\pi_i$
    \end{itemize}
\end{itemize}
Finally, let the model $M_{(\cl,w)} = (W,R,V)$ be defined as follows: 
\begin{itemize}
    \item $W = \bigcup_{i \geq 0} Y_i$
    \item $R_1 uv$ iff $v\in Y_i^{\pi}$ for $i \geq 1$ and
    \begin{itemize}
        \item if $\pi$ is a left path:
            \begin{itemize}
        \item $i = 1$ and $u = w$, or
        \item $i\neq 1$ is odd and $u \in Y^{\pi'}_{i-1}$ such that $\pi = \pi'u$, or
        \item $i$ is odd and $u \in Y^\pi_i$.
    \end{itemize}
    \item if $\pi$ is a right path:
     \begin{itemize}
        \item $i$ is even and $u \in Y^{\pi'}_{i-1}$ such that $\pi = \pi'u$, or
        \item $i$ is even and $v \in Y^\pi_i$.
    \end{itemize}
    \end{itemize}
    \item $R_2 uv$ iff $v \in Y_i^\pi$ for $i \geq 1$ and (just swap ``left'' and ``right'' above):
        \begin{itemize}
        \item if $\pi$ is a right path:
            \begin{itemize}
        \item $i = 1$ and $u = w$, or
        \item $i\neq 1$ is odd and $u \in Y^{\pi'}_{i-1}$ such that $\pi = \pi'u$, or
        \item $i$ is odd and $u \in Y^\pi_i$.
    \end{itemize}
    \item if $\pi$ is a left path:
     \begin{itemize}
        \item $i$ is even and $u \in Y^{\pi'}_{i-1}$ such that $\pi = \pi'u$, or
        \item $i$ is even and $v \in Y^\pi_i$.
    \end{itemize}
    \end{itemize}
    \item $V(p) = \{s^\pi_z \in W: s \in V^c(p)\} \cup (\{w\}\cap V^c(p))$
\end{itemize}

\begin{lemma}
    \label{lemma:kd45}
    For any branching $w \in W^c$ and proper $\cl$, $M_{(\cl,w)}$ is a \mc{KD45_2} model.
\end{lemma}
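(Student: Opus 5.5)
The plan is to prove that each $R_a$ ($a\in\{1,2\}$) is serial, transitive and Euclidean. The proof is organised around one structural characterisation of the relations in terms of \emph{clusters}, the sets $Y^\pi_i$ with $i\geq 1$ and $\pi\in\Pi_{i-1}$.

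First I will make the bookkeeping explicit. Every state of $W$ other than $w$ is indexed by its path, of the form $s^{\pi v}_z$ (or $x^{w^1}_z$, $y^{w^2}_z$). Hence it lies in exactly one cluster $Y^\pi_i$. I call the last element of $\pi$ the \emph{parent} of that cluster; for $\pi=w^1$ or $w^2$ the parent is the root $w$. Conversely, every state $v\in Y^\pi_i$ is the parent of exactly one cluster, namely $Y^{\pi v}_{i+1}$, and the root is the parent of exactly $Y^{w^1}_1$ and $Y^{w^2}_1$.

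Next I assign to each cluster a \emph{label}: a left cluster at odd level, or a right cluster at even level, gets label $1$; the others get label $2$. This alternates between parent cluster and child cluster, and $Y^{w^1}_1$, $Y^{w^2}_1$ get labels $1$ and $2$ respectively. Reading the definition of $R_1,R_2$, where I take the last clause of each case to read ``$u\in Y^\pi_i$'', I will show the following.
\begin{quote}
$R_a uv$ iff $v$ lies in a cluster $D$ with label $a$ and $u$ is either the parent of $D$ or an element of $D$.
\end{quote}
From this I will derive the key fact. For every $u\in W$ and every agent $a$, the set $R_a(u)=\{v: R_auv\}$ is exactly one cluster with label $a$:
\begin{itemize}
\item if $u=w$, it is $Y^{w^a}_1$;
\item if $u$ lies in a cluster $C$ with label $a$, it is $C$ itself;
\item otherwise $u$'s own cluster has the other label, so it is $u$'s child cluster, which has label $a$ by alternation.
\end{itemize}
Moreover, if $v\in D$ with $D$ labelled $a$, then $R_a(v)=D$.

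\textbf{Nonemptiness of clusters.} Each cluster $Y^{\pi v}_{i}$ with $v=t^\pi_s$ equals $\{s^{\pi v}_z : R^c s z\}$. Since every MCS contains $\hc\top$ by axiom \axiom{D}, every $s\in W^c$ has some $R^c$-successor, so the cluster is nonempty. The same holds for $Y^{w^1}_1$ and $Y^{w^2}_1$, which are built from $x_0,y_0$.

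The three properties then follow from the key fact.
\begin{itemize}
\item \textbf{Seriality.} $R_a(u)$ is a nonempty cluster.
\item \textbf{Transitivity.} Suppose $R_auv$ and $R_avv'$. Then $v\in R_a(u)=D$ with $D$ labelled $a$, so $R_a(v)=D$, hence $v'\in D=R_a(u)$.
\item \textbf{Euclidicity.} Suppose $R_auv$ and $R_auv'$. Then $v,v'\in D=R_a(u)$, and $R_a(v)=D$ contains $v'$.
\end{itemize}

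I expect the main obstacle to be the characterisation step: checking the parity cases of the definition against the label assignment. This includes the special treatment of level $1$, where the ``$i$ odd, $u$ parent'' clause is replaced by ``$u=w$'', and the typo noted above. It also includes making sure no state lies in two clusters, which is why the path superscripts are used. Once that case analysis is done, the remaining steps are immediate.
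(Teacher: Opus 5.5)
Your proof is correct and takes essentially the paper's approach: the paper gets seriality from a reflexive $R_a$-loop inside a cluster of the matching label, or otherwise a step into the nonempty child cluster (by seriality of $R^c$); it gets transitivity from the fact that $R_a$-transitions out of a cluster receiving $R_a$-arrows stay inside that cluster, and Euclidicity from the fact that all $R_a$-successors of a state form one cluster with universal $R_a$-access. You package these same facts into one explicit description of $R_a(u)$, and you correctly read the last clause for right paths as $u\in Y^\pi_i$; there is one harmless indexing slip, since the level-$i$ clusters are the $Y^\pi_i$ with $\pi\in\Pi_i$, not $\Pi_{i-1}$.
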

\begin{proof}
For \emph{seriality}, consider the case for $R_1$. For $w$, we have that $R^1wx^{w^1}_{z'}$ for some $z'$ such that $R^cx_0z$ (exists since $R^c$ is serial). Let $x' = s^\pi_z \in Y^\pi_i$. First assume that $\pi$ is a left path. If $i$ is odd then $R_1x'x'$. If $i\neq 0$ is even, let $z'$ be such that $R^czz'$ (it exists since $R^c$ is serial). We have that $z^{\pi s^\pi_z}_{z'} \in Y_{i+1}^{\pi s^\pi_z}$, and $R_1s^\pi_z z^{\pi s^\pi_z}_{z'}$ by definition. The cases for the right path, and for $R_2$, are similar.

For \emph{transitivity}, consider frist $R_1$. Assume that $R_1 ab$ and $R_1 bc$, where $b \in Y^\pi_i$ for some $\pi$ and $i$. If $\pi$ is a left path, $R_1ab$ can only hold if $i$ is odd. Then $R_1bc$ implies that also $c \in Y^\pi_i$ since the only outgoing transitions from $Y^\pi_i$ when $i$ is odd and $\pi$ is left is to other nodes inside $Y^\pi_i$. Then, by definition, also $R_1ac$. If $\pi$ is a right path the argument is symmetric: then $R_1ab$ can only hold if $i$ is even. Then $R_1bc$ implies that also $c \in Y^\pi_i$ since the only outgoing transitions from $Y^\pi_i$ when $i$ is even and $\pi$ is right is to other nodes inside $Y^\pi_i$. Then, by definition, also $R_1ac$. Thus, $R_1$ is transitive. Transitivity of $R_2$ can be shown in exactly the same way.        

For \emph{euclidicity}, by construction, all outgoing transitions for $R_k$ ($k \in \{1,2\}$) from any state goes to the same set $Y^\pi_{i}$, and $R_k$ has universal accessibility inside that set. 
\end{proof}

Let $M^*_{(\cl,w)} = (W,R^*,V)$ be the uni-modal variant of $M_{(\cl,w)}$, which has the same state space and valuation function and where $R^* = (R_1 \cup R_2)^*$. The following is immediate from the definition:

\begin{lemma}
    \label{lemma:uni}
    For any formula $\phi$ and state $v \in W$, $M_{(\cl,w)},v \models \phi$ iff $M^*_{(\cl,w)},v \models \phi$
\end{lemma}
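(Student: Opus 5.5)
This follows by a routine induction on the structure of $\phi$, simultaneously for all states $v \in W$. The key observation is that the multi-agent semantics only ever uses $R_1$ and $R_2$ through the derived relation $\rcb = (R_1 \cup R_2)^*$. This is exactly the single relation $R^*$ of the uni-modal variant $M^*_{(\cl,w)}$, and the two structures share the same state space $W$ and valuation $V$.

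For the base case $\phi = p$, both sides reduce to $v \in V(p)$, since the valuation is literally the same function. The Boolean cases $\neg\psi$ and $\psi \wedge \chi$ go through directly: both interpretations use identical clauses, so we simply apply the induction hypothesis to the immediate subformulas at the same state $v$.

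The only case with any content is $\phi = C\psi$. In $M_{(\cl,w)}$ we have $M_{(\cl,w)},v \models C\psi$ iff $M_{(\cl,w)},t \models \psi$ for every $t$ with $\rcb vt$. Here $\rcb$ is computed from $M_{(\cl,w)}$ as $(R_1 \cup R_2)^*$. In $M^*_{(\cl,w)}$, the uni-modal clause gives $M^*_{(\cl,w)},v \models C\psi$ iff $M^*_{(\cl,w)},t \models \psi$ for every $t$ with $R^* vt$, where $R^* = (R_1\cup R_2)^*$ by definition. The two quantifier domains are therefore the same set of states, and the induction hypothesis applied to $\psi$ at each such $t$ gives the equivalence.

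There is no genuine obstacle. The one point to check is that the $R^*$ in the definition of $M^*_{(\cl,w)}$ is the same transitive closure (not the reflexive–transitive closure) as $\rcb$ in the semantics, and it is.
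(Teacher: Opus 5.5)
Your proposal is correct and matches the paper, which states this lemma without proof as immediate from the definitions. Your structural induction, carried out simultaneously over all states for the $C$ case, spells out exactly that observation: the multi-agent clause for $C$ uses only $(R_1 \cup R_2)^*$. That is by definition the accessibility relation of $M^*_{(\cl,w)}$, which shares the same $W$ and $V$.
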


In the constructed model we have several copies $s^\pi_z$ of states $s$ in the canonical model, for each path $\pi$ and for each outgoing transition from $s$ to state $z$ in the canonical model. We show that every state in (the uni-modal) model $M^*_{(\cl,w)}$, except the inital state $w$, is bisimilar (see the appendix for definitions) to the corresponding state in the (uni-modal) canonical model.

\begin{lemma}
    \label{lemma:bisim}
    For any $s^\pi_z \in W$, $M^*_{(\cl,w)}, s^\pi_z \bisim M^c,s$.
\end{lemma}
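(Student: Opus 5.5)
The plan is to exhibit an explicit relation and check that it is a bisimulation between the uni-modal models $M^*_{(\cl,w)}$ and $M^c$. Define $Z \subseteq (W\setminus\{w\}) \times W^c$ by $Z(s^\pi_z, s)$ for every state $s^\pi_z \in W$; that is, every copy is related to the canonical state it is a copy of. The atomic clause is immediate from the definition of $V$, since $s^\pi_z \in V(p)$ iff $s \in V^c(p)$. What remains is the forth and back conditions for $R^* = (R_1\cup R_2)^*$ against $R^c$.

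First I will describe the one-step successors of a node $s^\pi_z \in Y^\pi_i$ in $R_1 \cup R_2$. By the definition of $R_1,R_2$ there are two kinds. The first kind is the nodes of its own cluster $Y^\pi_i$, reached via the agent whose parity matches $i$ and the side of $\pi$. These nodes have the form $s'^\pi_{z'}$, where $s'$ is the same canonical state $s$ (for $i\ge 2$, $\pi=\pi'v$ with $v=t^{\pi'}_s$, so all of $Y^\pi_i$ are copies of $s$). For $i=1$ they are copies of $x_0$ or of $y_0$. The second kind is the nodes of the child cluster $Y^{\pi s^\pi_z}_{i+1} = \{z^{\pi s^\pi_z}_{z''} : R^c z z''\}$, reached via the other agent. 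The key bookkeeping fact is then: $R^*$-successors of $s^\pi_z$ are exactly copies $u^{\pi'}_{z'}$ lying in the subtree below (or in the same cluster as) $s^\pi_z$. This is proved by induction on path length, using that every one-step move goes either to a copy of the same $s$ or to a copy of some $z$ with $R^c s z$.

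\emph{Forth.} Suppose $R^* s^\pi_z\, u^{\pi'}_{z'}$. By induction on the length of the $R_1\cup R_2$ path, I will show $R^c s u$, or else $u=s$. Staying in the cluster gives $u=s$. In that case I need $R^c s s$, and this holds because $s$ is $R^c$-reachable from $w$ (every constructed copy is), so it is reflexive by \axiom{Cc}; this is the Lemma \ref{lemma:l1}(3) base-case argument. Moving to a child gives a copy of $z$ with $R^c s z$. Composing steps uses transitivity of $R^c$ (axiom \axiom{4}).

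\emph{Back.} Suppose $R^c s t$. I need some $R^*$-successor of $s^\pi_z$ that is a copy of $t$. The cluster $Y^\pi_i$ contains the copy $s^\pi_t$, since it has one member for each $R^c$-successor of $s$, and $s^\pi_z$ reaches it in one step by universal access. From $s^\pi_t$ the other agent leads to the child cluster $\{t^{\pi s^\pi_t}_{z''}: R^c t z''\}$, which is non-empty by seriality. Any element of this cluster is a copy of $t$, and it is $R^*$-reachable from $s^\pi_z$ in two steps. This is exactly Key idea number one: splitting $s$ into one copy per outgoing transition loses nothing.

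The main obstacle is the index bookkeeping for $i=1$ and for the left/right, odd/even alternation. I will need to check that $Y^{w^1}_1$ consists of copies of $x_0$ (and $Y^{w^2}_1$ of copies of $y_0$), that the within-cluster and child edges are assigned to opposite agents in every case, and that the reflexivity needed in the forth direction holds for $x_0$ and $y_0$ as well. All copies lie strictly below $w$, so \axiom{Cc} applies to each of them. I will handle this by a uniform case split on $i=1$ versus $i\ge2$, reading off the definition of $R_1$ and $R_2$.
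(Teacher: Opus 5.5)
Your proposal is correct and follows the paper's proof: the same relation $Z$ sending each copy $s^\pi_z$ to $s$; forth by discarding intra-cluster steps and composing the remaining $R^c$-steps by transitivity; and back by moving to the sibling $s^\pi_t$ in the same cluster and then into its child cluster of copies of $t$. Your appeals to \axiom{Cc} for $R^c ss$ (the paper's terse ``since $s \neq w$'') and to seriality for non-emptiness of the child cluster only spell out details the paper leaves implicit.
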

\begin{proof}
    Let the relation $Z \subseteq (W\setminus \{w\}) \times W^c$ be defined by: $s^\pi_z Z s$. We show that it is a bisimulation:
    \begin{itemize}
        \item Atoms: immediate.
        \item Forth: Let $s^\pi_z Z s$ and $s^\pi_zR^*s'^{\pi'}_{z'}$. If $s' = s$, then $sR^cs'$ (since $s \neq w$). Assume that $s' \neq s$. The path from $s^\pi_z$ to $s'^{\pi'}_{z'}$ in $M_{(\cl,w)}$ consists of a sequence of steps some of which are inside the same cluster $Y_i^\pi$ (where all nodes correspond to the same node in $M^c$) and some that go from one cluster $Y_i^\pi{''}$ to the next $Y_{i+1}^{\pi{''} t^{\pi''}_x}$ such that $R^ctx$. Simply disregard the former, and we are left with a path from $s$ to $s'$ in $M^c$. Thus, $R^css'$, and we have that $s'^{\pi'}_{z'}Zs'$.
        \item Back: Let $s^\pi_z Z s$ and $R^css'$. Assume that $s^\pi_z \in Y^\pi_i$. Then also $s^\pi_{s'} \in Y^\pi_i$, and $s^\pi_zR_k s^\pi_{s'}$ for $k=1$ or $k=2$. By construction, $s^\pi_{s'}R_{\overline{k}} {s'}^{\pi s^\pi_{s'}}_{z'}$ for some $z'$, where $\overline{1} = 2$ and $\overline{2} = 1$. Thus, $s^\pi_zR^* s'^{\pi'}_{z'}$.
    \end{itemize}
\end{proof}

\begin{lemma}
    \label{lemma:l2}
    For any branching $w \in W^c$ and any proper $\cl$, for any formula $\phi \in \cl$, $M_{(\cl,w)},w \models \phi$ iff $M^c,w \models \phi$.
\end{lemma}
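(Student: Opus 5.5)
We prove the statement by induction on the structure of $\phi \in \cl$, using that $\cl$ is closed under subformulas. By Lemma \ref{lemma:uni} it suffices to work with the uni-modal variant $M^*_{(\cl,w)}$. The atomic case is immediate from the definition of $V$, since $w \in V(p)$ iff $w \in V^c(p)$. The Boolean cases follow directly from the induction hypothesis. The only real work is the modal case, which we phrase via $\hc$.

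For the modal case, note first that every $R^*$-successor of $w$ in $M_{(\cl,w)}$ has the form $s^\pi_z$: the state $w$ appears only in $Y_0$ and is never re-entered. By Lemma \ref{lemma:bisim}, each such successor $s^\pi_z$ satisfies exactly the same $\lang$ formulas in $M^*_{(\cl,w)}$ as $s$ does in $M^c$. The successors of $w$ are exactly the states reachable from the clusters $Y^{w^1}_1$ and $Y^{w^2}_1$. The former consists of copies of $x_0$ and the latter of copies of $y_0$, and through them one reaches copies of every $R^c$-successor of $x_0$ or $y_0$. The states $s$ whose copies $s^\pi_z$ are reachable from $w$ are therefore exactly $x_0$, $y_0$ and their $R^c$-successors. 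Call this set $S$. Since $R^cwx_0$ and $R^cwy_0$, transitivity of $R^c$ (axiom \axiom{4}) gives $S \subseteq \{s : R^cws\}$. Here we also need $R^cx_0x_0$ and $R^cy_0y_0$, which follow from axiom \axiom{Cc} because $x_0,y_0$ are $R^c$-successors of $w$; this is what guarantees that copies of $x_0$ and $y_0$ actually exist as successors of $w$.

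Now take $C\chi \in \cl$. By properness $\hc\neg\chi \in \cl$, so it suffices to handle $\hc\psi \in \cl$ with $\psi \in \cl$; the case of $C\chi$ then follows by duality. Here we use the induction hypothesis only for formulas evaluated at $w$; at the other states Lemma \ref{lemma:bisim} does the work, so the induction hypothesis is not even needed there.

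For the left-to-right direction, suppose $M,w \models \hc\psi$. Then some $s^\pi_z$ reachable from $w$ satisfies $\psi$. By Lemma \ref{lemma:bisim}, $M^c,s \models \psi$, and since $s \in S$ we have $R^cws$. Hence $M^c,w \models \hc\psi$.

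For the right-to-left direction, which is the main point, suppose $M^c,w \models \hc\psi$. In the construction of $X^{(\cl,w)}_0$ there is a $u_i$ with $R^cwu_i$ and $M^c,u_i \models \psi$. By Lemma \ref{lemma:l1}(3) there is $u \in X^{(\cl,w)} \subseteq X^\cl_w = \{x_0,y_0\}$ with $R^cuu_i$, so $u_i \in S$. Some copy $(u_i)^\pi_z$ is then $R^*$-reachable from $w$, and by Lemma \ref{lemma:bisim} it satisfies $\psi$. Hence $M,w \models \hc\psi$.

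The only delicate step is the reachability claim: that every $R^c$-successor of $x_0$, together with $x_0$ itself, has a copy reachable from $w$. This must be checked against the definition of the $Y_i^\pi$ and the alternating $R_1/R_2$ edges, in the same way as the Back clause of Lemma \ref{lemma:bisim}.
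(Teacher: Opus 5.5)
Your proof is correct and follows essentially the same route as the paper. Both argue by induction on $\phi$, and in the modal case both use Lemma \ref{lemma:bisim}, Lemma \ref{lemma:uni} and transitivity of $R^c$ for one direction. For the other direction both use the witness in $X^{(\cl,w)}_0$, Lemma \ref{lemma:l1}(3) and the two-step path $w \to u^{w^x}_v \to v^{\cdots}_z$; the paper phrases reachability by discarding intra-cluster steps, where you package it as your set $S$.

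One small inaccuracy, which your argument does not depend on: copies of $x_0,y_0$ in $Y_1$ already exist by seriality (axiom \axiom{D}). Reflexivity from \axiom{Cc} is really needed inside Lemma \ref{lemma:l1}(3) and in the Forth clause of Lemma \ref{lemma:bisim}.
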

\begin{proof}
    The proof is by induction on the structure of $\phi \in \cl$. 
    \begin{itemize}
        \item $\phi = p \in \prop$: $M_{(\cl,w)},w \models p$ iff $w \in V(p)$ iff $w \in V^c(p)$ iff $M^c, w \models p$.
        \item Boolean connectives: straightforward.
        \item $\phi = C\psi$: for the implication towards the left, we show the contrapositive. Assume that $M_{(\cl,w)},w \not\models \phi$. $M_{(\cl,w)},w \models \hc \neg \psi$, i.e., there is a $s^\pi_z \in W$ such that $R^*ws^\pi_z$ and $M_{(\cl,w)},s^\pi_z \models \neg\psi$. The path from $w$ to $s^\pi_z$ in $M_{(\cl,w)}$ consists of a sequence of steps some of which are inside the same cluster $Y^\pi_i$ (where all nodes correspond to the same node in $M^c$) and some that go from one cluster $Y_i^\pi{'}$ to the next $Y_{i+1}^{\pi{'} t^{\pi{'}}_z}$ such that $R^ctz$. Simply disregard the former, and we are left with a path from $w$ to $s$ in $M^c$. By Lemmas \ref{lemma:uni} and \ref{lemma:bisim}, $M^c,s \models \neg \psi$. Thus, $M^c,w \models \hc \neg \psi$.

        For the direction towards the right, we show the contrapositive. Assume that $M^c,w \models \hc \neg \psi$. Since $\hc \neg\psi \in \cl$, there is a $v \in X_0^{(\cl,w)}$ such that $M^c, v \models \neg \psi$. By Lemma \ref{lemma:l1}.3, there is a $u \in X^{(\cl,w)}$ such that $R^cuv$. By construction of $M_{(\cl,w)}$ that means that $R_ku^{w^x}_v v^{w^x u^{w^x}_v}_z$ (for some $x \in \{1,2\}$ and $z$) for some $k \in \{1,2\}$, and since we also have that $R_jwu^{w^x}_v$ (for some $j$), we get that $R^*wv^{w^x u^{w^x}_v}_z$. By Lemma \ref{lemma:bisim}, $M_{(\cl,w)}^*,v^{w^x u^{w^x}_v}_z \models \neg \psi$ and by Lemma \ref{lemma:uni} $M_{(\cl,w)},v^{w^x u^{w^x}_v}_z \models \neg \psi$. Thus, $M_{(\cl,w)},w \models \hc \neg \psi$.
    \end{itemize}
\end{proof}

\begin{theorem}
    $\sysTwo$ is complete wrt. the class of all \mc{KD45_2} models.
\end{theorem}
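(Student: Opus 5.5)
We argue by contraposition: we show that every $\sysTwo$-consistent formula $\phi$ is satisfied in some \mc{KD45_2} model. First we extend $\{\phi\}$ to a maximal consistent set $w_0$ by Lindenbaum's lemma. Next we take $\cl$ to be the smallest proper set containing $\phi$. This set is finite: the closure under subformulas, the addition of $\hc\top$, and the addition of $\hc\neg\psi$ for each $C\psi$ only add finitely many formulas, and each added $\hc\neg\psi = \neg C\neg\neg\psi$ has subformulas $C\neg\neg\psi$, $\neg\neg\psi$ and $\neg\psi$, which generate no new $C$-formulas beyond one more layer. If needed, we apply the closure once more, treating $\hc$ as primitive.

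The standard truth lemma for the canonical uni-modal model $M^c$ gives $M^c,w_0\models\phi$. It holds because $\sys$ contains $K$ and Nec, so $M^c$ is the usual canonical model of a normal modal logic.

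If $w_0$ is branching, we apply Lemma~\ref{lemma:kd45} and Lemma~\ref{lemma:l2} to $M_{(\cl,w_0)}$. This gives a \mc{KD45_2} model with $M_{(\cl,w_0)},w_0\models\phi$, and we are done.

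The remaining step, and the main obstacle, is the case where $w_0$ is not branching. By D, $w_0$ then has exactly one $R^c$-successor $v$, and by Cc we have $R^cvv$. We avoid the branching hypothesis by a direct variant of the construction.

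\begin{itemize}
\item Take both $x_0$ and $y_0$ to be $v$, i.e.\ build the left and right subtrees from the same state. The construction of $M_{(\cl,w)}$ never actually used $x_0\neq y_0$.
\item Check that Lemma~\ref{lemma:kd45} and Lemma~\ref{lemma:bisim} still go through. Their proofs never used branching, only seriality of $R^c$ and the labeling scheme.
\item Check Lemma~\ref{lemma:l2} in this case. For the right-to-left direction at $C\psi$, every $R^c$-successor of $w_0$ equals $v$, and $v$ is the $x_0$ used. So if $M^c,w_0\models\hc\neg\psi$, then $M^c,v\models\neg\psi$. Since $R^cvv$, the left cluster contains a node $v^{w^1}_v$ with an outgoing transition to a copy of $v$. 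By Lemma~\ref{lemma:bisim}, that copy, like every copy of $v$, satisfies $\neg\psi$.
\end{itemize}

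Alternatively, in the non-branching case we could take the two-agent model whose states are $w_0$ and the copies $v^{w^1}_v$, $v^{w^2}_v$ of $v$. Both agents step from $w_0$ into this cluster, and the cluster is then unravelled as before. Euclidicity holds because each agent's successors of $w_0$ form a single cluster with universal access.

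Either way, $\phi$ is satisfied in a \mc{KD45_2} model, so by contraposition every \mc{KD45_2}-valid formula is a theorem of $\sysTwo$. Combined with soundness, this completes the argument.
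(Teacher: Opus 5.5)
Your proof is correct, but it handles the branching hypothesis differently from the paper. The paper never meets a non-branching MCS. It replaces $\phi$ by $\phi'=\phi\wedge\hc p\wedge\hc\neg p$ for an atom $p$ not occurring in $\phi$, so any MCS containing $\phi'$ has two distinct successors and Lemmas~\ref{lemma:kd45} and~\ref{lemma:l2} apply verbatim. That keeps the final argument to a few lines, but it has two costs. First, it rests on the claim that $\phi'$ is consistent, which the paper calls easy but which needs an argument. One option is to double the canonical model, with $p$ true on one copy and false on the other, and use soundness on frames with the first-order properties. Second, it presupposes a spare atom.

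You instead notice that branching is used only to obtain $x_0\neq y_0$, which the construction never needs, because the two subtrees are kept apart by the labels $w^1,w^2$. The proofs of Lemmas~\ref{lemma:kd45} and~\ref{lemma:bisim} go through unchanged. The witness step of Lemma~\ref{lemma:l2} is immediate because $v$ is the only successor of $w_0$. Your route is more self-contained and works even for finite $\prop$. It also shows that the branching hypothesis can be dropped altogether: one may set $x_0=y_0$ whenever $X^{(\cl,w)}$ is a singleton.

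The non-branching case is simpler still. By transitivity every successor of $v$ is a successor of $w_0$, hence equals $v$. So the generated submodel is just $w_0\to v$ with a loop at $v$, and labelling every arrow with both agents already gives a \mc{KD45_2} model. Your second alternative is underspecified: if both agents have universal access inside the cluster, adding further outgoing edges for either agent would violate Euclidicity. But no unravelling is needed there.

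One slip concerns the finiteness of $\cl$. With $\hc$ an abbreviation, $\hc\neg\psi=\neg C\neg\neg\psi$ contributes the new $C$-formula $C\neg\neg\psi$, which in turn demands $\hc\neg\neg\neg\psi$, and so on. So the literal closure is infinite, and your ``one more layer'' claim is false. Your fallback of treating $\hc$ as primitive, i.e.\ not decomposing the added formulas $\hc\neg\psi$, is the right repair. It is effectively what the paper's own definition of $\cl$ does, and it suffices. Lemma~\ref{lemma:l2} only needs to be run over subformulas of $\phi$; the formulas $\hc\neg\psi$ serve only to choose witnesses in $X^{(\cl,w)}_0$.
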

\begin{proof}
    Assume that $\phi$ is consistent. Let $\phi' = \phi \wedge \hc p \wedge \hc \neg p$, for some $p$ not occurring in $\phi$. It is easy to see that $\phi'$ is consistent. By the standard Lindenbaum construction $\phi$ is included in some branching MCS $\Phi$. We have that $M^c,\Phi \models \phi$ by the standard truth lemma for canonical (in this case uni-modal) models. Let $\cl$ be the smallest set containing all subformulas of $\phi'$ as well as $\hc \top$ and that is closed under the rule: if $C\psi \in \cl$ then $\hc \neg\psi \in \cl$ (this set is finite). By Lemma \ref{lemma:l2} $M_{(\cl,\Phi)},\Phi \models \phi$. Thus, $\phi$ is satisfiable on a \mc{KD45_2} model (Lemma \ref{lemma:kd45}).
\end{proof}

\subsection{The general case}
\label{sec:n}

The proof for the case that $n > 2$ is almost identical, with only two minor differences:

\begin{itemize}
    \item $n$ outgoing transitions from the initial state: in the first step of the model construction, we use the \axiom{Cn} axiom in exactly the same way to get the set $X^\cl_w$, now consisting of $n$ (instead of 2) states.
    \item Choice of alternating agents and seriality: we only need two agents for alternation, so for each of those $n$ outgoing transitions from the initial state we can chose one other agent to alternate with along the path. To take care of seriality for the other agents, within each cluster also add reflexive access for any other agent, different from those two (those states all have incoming transitions and thus are reflexive so adding reflexive access for some agents doesn't change anything).
\end{itemize}

We get the following.
\begin{theorem}
    For any $n \geq 2$, $\sys$ is sound and complete wrt.\ the class of all \mc{KD45_n} models.
\end{theorem}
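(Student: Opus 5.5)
Soundness is already given by the earlier soundness lemma for every $n \geq 2$, so the work is completeness, which I would obtain by adapting the two-agent construction. Given a $\sys$-consistent $\phi$, I form $\phi' = \phi \wedge \hc p \wedge \hc \neg p$ for a fresh $p$. Then $\phi'$ is consistent, so it extends to a branching MCS $\Phi$, and the truth lemma for the uni-modal canonical model gives $M^c,\Phi \models \phi$. I take $\cl$ to be the same proper closure of $\phi'$ as before. The remaining task is a $\mc{KD45_n}$ model $M_{(\cl,\Phi)}$ that agrees with $M^c$ at the root on all formulas in $\cl$.

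\textbf{Step 1: the set $X^\cl_w$.} I redefine it so that $|X^\cl_w| = n$. Starting from $X^{(\cl,w)}_0$, whenever the current set has at least $n+1$ elements, I pick $n+1$ distinct ones. The frame property of Lemma \ref{lemma:canonicity}, applied to \axiom{Cn} in $M^c$, yields $z$ with $R^cwz$ that sees two of them, and I replace those two by $z$. This terminates, and the argument of Lemma \ref{lemma:l1} carries over verbatim: every $t \in X_0$ stays $R^c$-reachable from some remaining element, using \axiom{Cc} (shift-reflexivity) for the base case and \axiom{4} for the replacement step. I then pad up to exactly $n$ distinct successors of $w$. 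Padding needs $w$ to have at least $n$ $R^c$-successors, and here the fresh-atom trick must be strengthened: I conjoin $\hc p_1, \ldots, \hc p_n$ together with mutual exclusivity, using fresh atoms, so that $\Phi$ has $n$ pairwise distinct successors. Consistency of this strengthened $\phi'$ holds by the soundness direction, since a satisfying model of $\phi$ can be relabelled. Alternatively, duplicates in $X^\cl_w$ may simply be allowed, since the construction only needs one labelled copy per agent.

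\textbf{Step 2: the tree.} Let $X^\cl_w = \{x^1_0,\ldots,x^n_0\}$. Subtree $k$ starts with $w^k$ and hangs below $x^k_0$. It alternates between agent $k$ and a fixed partner agent $k' \neq k$, for example $k' = k+1 \bmod n$. Odd levels form $k$-clusters, with universal $R_k$ inside the cluster and incoming $R_k$ edges; even levels form $k'$-clusters. Each node $s^\pi_z$ in a cluster has exactly one outgoing edge to the next cluster, for the other agent of the pair. For every agent $j$ not used at a given cluster's level, I add reflexive $R_j$ loops at every node of that cluster. At the root $w$, agent $k$ reaches the first cluster of subtree $k$, which makes every $R_k$ serial at $w$.

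\textbf{Step 3: verification.} Reflexive loops are harmless: every non-root node is $R^c$-reflexive by \axiom{Cc}, so $s^\pi_z R^* s^\pi_z$ already corresponds to $R^css$. This keeps the bisimulation of Lemma \ref{lemma:bisim} intact, and Lemma \ref{lemma:l2} goes through unchanged. The \emph{KD45} check is the main obstacle. Euclidicity and transitivity of $R_j$ require that all $R_j$-successors of any node lie in a single set with universal $R_j$ inside it. This holds for a node in a cluster of an agent other than $j$, whose $R_j$-successors are either the single next cluster or only itself. At the root, each agent $k$ points only into subtree $k$'s first cluster, which is universal for $R_k$. Seriality of each $R_j$ at the root requires that every agent labels some root edge, which is why I need exactly $n$ root successors, one per agent. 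I expect this $j$-by-$j$ bookkeeping to be the point that requires care.
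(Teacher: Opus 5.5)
Your proposal is correct and takes essentially the paper's route: \axiom{Cn}, through its canonical first-order correspondent, cuts the root witnesses down to $n$ successors, one labelled by each agent; each subtree alternates with a fixed partner agent; and reflexive loops are added for the remaining agents, which leaves $R^*$ and hence the bisimulation and truth lemmas unchanged. The one flaw is that your ``soundness'' argument for consistency of the strengthened fresh-atom formula is circular, since it presupposes a $\mc{KD45_n}$ model of $\phi$. Use your alternative of allowing repetitions in $X^{\cl}_w$ instead; it is sound because the $w^k$ path prefixes keep the subtrees disjoint, and it also handles the padding to $n$ root successors that branching alone does not guarantee when $n>2$.
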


\section{Reflexive transitive closure}
\label{sec:reflexive}

Sometimes (e.g., in 
\cite{vanditmarsch:2003:concurrent,jamroga:2004b,vDvdHK2007}),
the \emph{reflexive}
transitive closure is used instead of transitive closure -- albeit mostly in the case of S5 knowledge when the two definitions coincide. We still give the following result for the KD45 case, with reflexive transitive closure. Recall that S4 = KT4.

\begin{theorem}
    S4 over the language $\lang$, where $C\phi$ is interpreted by taking the reflexive transitive closure, is sound and complete wrt.\ all \mc{KD45_n} models, for any $n \geq 1$.
\end{theorem}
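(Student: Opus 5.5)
Soundness is immediate. Interpreting $C$ by $R^\circ = (\bigcup_{i\in N}R_i)^*\cup \mathit{Id}$ gives a reflexive and transitive relation. Hence \axiom{K} and \axiom{Nec} hold because the semantics is relational. The axiom $C\phi\rightarrow\phi$ (T) holds by reflexivity and \axiom{4} by transitivity. No individual KD45 property is needed here, which is why the result holds for every $n\geq 1$.

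For completeness, I would start from any S4-consistent $\phi$. By the standard completeness of S4, $\phi$ is satisfied at the root $r$ of a finite reflexive transitive frame $F=(W,\leq,V)$. I may assume $F$ is a tree-like preorder, meaning a finite tree of clusters obtained by the usual unravelling/finite model property for S4. The task is then to build a \mc{KD45_n} model whose reflexive transitive closure $R^\circ$ yields a uni-modal model bisimilar to $F$ at $r$. Bisimulation invariance of $\lang$ then gives satisfaction, just as Lemmas \ref{lemma:uni} and \ref{lemma:bisim} were used above. Note that we only need agents $1$ and $2$ in a nontrivial role. For $n=1$ a separate argument is needed, since there $R^\circ = R_1\cup \mathit{Id}$, and $R_1$ transitive and Euclidean makes $R^\circ$ very special (depth at most two). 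So the $n=1$ case may need care: I would check whether S4 is really complete there, for instance via the observation that S4 formulas that fail somewhere fail on a two-level model. I expect this to be the delicate point.

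For $n\geq 2$ I would reuse the cluster-splitting idea of Section~\ref{sec:completeness}, which is now easier because there is no initial non-reflexive state to handle and no \axiom{Cc}/\axiom{Cn} obstruction. Reflexivity of $R^\circ$ is free, so every state behaves like the reflexive states above. Concretely, I would build a tree of clusters level by level. For each state $s$ of $F$ reached via a path $\pi$ with incoming agent $k$, create a cluster $\{s^\pi_z : s\leq z\}$ with universal $R_k$-access inside. From each copy $s^\pi_z$, add one $R_{\bar k}$-edge into the cluster created for $z$ (with incoming agent $\bar k$), alternating agents. The root gets the same treatment with agent $1$: a cluster of copies of $r$, one per successor. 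Any further agents get reflexive loops within clusters to secure seriality. Every state then lies in a cluster for some agent, so it is reflexive for that agent and serial for all.

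The checks are analogous to Lemma~\ref{lemma:kd45}. Euclidicity and transitivity hold because each agent's outgoing transitions from any state go into a single cluster with universal access for that agent, and because agents alternate. Seriality holds because $\leq$ is reflexive, so every copy has an outgoing edge. For bisimilarity, I would define $s^\pi_z \, Z\, s$. The forth condition follows because any $R^\circ$-path projects, after discarding intra-cluster steps, to a $\leq$-chain, and transitivity of $\leq$ finishes it. The back condition holds because for $s\leq s'$ the copy $s^\pi_{s'}$ is in the same cluster and has an edge into a cluster of copies of $s'$. The main obstacle is the $n=1$ case noted above; for $n\geq2$ the argument is a routine simplification of the earlier proof.
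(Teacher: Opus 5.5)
Your soundness argument and your construction for $n\geq 2$ are essentially the paper's sketch: the alternating-cluster construction of Section~\ref{sec:completeness}, simplified because every point, including the root, is reflexive. Starting from a finite rooted S4 preorder instead of the canonical model is a harmless variation. The construction needs one small repair. Each copy $s^\pi_z$ must get $R_{\bar k}$-edges to \emph{every} element of the cluster created for $z$, as in the paper's formal definition of $R_1,R_2$. With only a single edge $s^\pi_z R_{\bar k} y$ into that cluster, $R_{\bar k}$ is not transitive, because $y R_{\bar k} y'$ holds for all $y'$ in the cluster while $s^\pi_z R_{\bar k} y'$ does not.

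The genuine gap is $n=1$, which you leave open. It cannot be filled, because the statement is false there.

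With one agent the reflexive transitive closure is $R_1\cup\mathit{Id}$. Since $R_1$ is serial, transitive and Euclidean, $R_1(t)=R_1(s)$ for every $t\in R_1(s)$. So every point-generated subframe of $(W,R_1\cup\mathit{Id})$ is either one cluster or a single reflexive point below one cluster. Such frames are connected.

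Hence $C(Cp\rightarrow q)\vee C(Cq\rightarrow p)$ is valid on all \mc{KD45_1} models. Suppose $t$ witnesses the failure of the first disjunct ($Cp\wedge\neg q$) and $u$ witnesses the failure of the second ($Cq\wedge\neg p$). By connectedness they are comparable. If $R^\circ tu$, then $Cp$ at $t$ contradicts $\neg p$ at $u$; if $R^\circ ut$, then $Cq$ at $u$ contradicts $\neg q$ at $t$.

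This formula is not an S4 theorem. It fails at the root of the preorder in which a reflexive root sees two incomparable reflexive points $a,b$, with $p$ true only at $a$ and $q$ true only at $b$.

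Your proposed route via ``S4 non-theorems fail on two-level models'' is also false. The formula $\neg p_2\wedge\hc(Cp_2\wedge\neg p_1\wedge\hc Cp_1)$ is S4-satisfiable on a three-element chain but unsatisfiable on \mc{KD45_1} models under reflexive transitive closure.

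So the theorem holds only for $n\geq 2$, where your argument is fine. The paper's one-line sketch (``exactly like in the transitive case'') has the same hole: its construction needs two agents to alternate. The paper's own conclusion notes, for the transitive case, that this alternation is exactly why $n\geq 2$ is required.
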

\begin{proof}
    Proof sketch: this can be shown exactly like in the transitive case. Note that both \axiom{Cc} and \axiom{Cn} follow from reflexivity (the \axiom{T} axiom). In this case the proof can be simplified, since also the initial state is reflexive.
\end{proof}

\section{Conclusions}
\label{sec:conclusions}

Not only is common belief based on individual KD45 belief not merely KD4, but the additional properties are different for different numbers of agents.  This is obfuscated by the standard axiomatisations in terms of individual belief and induction axioms/rules. It is also similar to the case for somebody-knows \cite{sk}.

By adding the \axiom{Cc} and \axiom{Cn} axioms to KD4, we get a family of logics, and we showed that each of them is a complete characterisation of common belief for $n$ KD45 agents, in the language with only one common belief operator for the grand coalition.

The proof depends crucially on the ability to alternate agents between the clusters, which is why $n \geq 2$ is required: the corresponding result for one agent does not hold (otherwise KD45 would be equal to KD4CcC1, which it is not).

This depends on the standard interpretation in a class of models with a finite and fixed number of agents. Of course, since we (unlike in the case of one common belief operator for each group of agents) don't have agent names in the syntax, that is not necessary -- we could, alternatively, interpret the language in the broader classes of (1) all models with any finite number of agents, or (2) all models with countably infinitely many agents. We have to leave out details due to the restricted space but in both cases the resulting logic is completely axiomatised by KD4Cc (i.e., by dropping the counting axiom). 

We also get a corresponding result for the case when individual belief is K45, by adapting the proof for the KD45 case (again, we have to leave out the details): the resulting logic is K4CcCn. Other cases, like K5 and KB, are left for future work.

\bibliographystyle{plain}
\bibliography{bib}

\section*{Appendix}

\begin{definition}[Bisimulations]
Let $M^1 = (S^1, R^1, V^1)$ and $M^2 = (S^2, R^2, V^2)$ be two models. We say that $M^1$ and $M^2$ are \emph{bisimilar} (denoted $M^1 \leftrightarrows M^2$) if there is a non-empty relation $Z \subseteq S^1 \times S^2$, called a \emph{bisimulation}, such that for all $sZt$:
\begin{description}
\item[\textit{Atoms}] for all $p \in \prop$: $s \in V^1(p)$ if and only if $t \in V^2(p)$,
\item[\textit{Forth}] for all $i \in \agents$ and  $u \in S^1$ s.t. $s R^1_i u$, there is a $v \in S^2$ s.t. $t R^2_i v$ and $uZv$,
\item[\textit{Back}] for all $i \in \agents$ and $v \in S^2$ s.t $t R^2_i v$, there is a $u \in S^1$ s.t. $s R^1_i u$ and $uZv$. 
\end{description} 
We say that $M^1,s$ and $M^2,t$ are bisimilar and denote this by ${M^1,s \leftrightarrows M^2,t}$ if there is a bisimulation linking states $s$ and $t$.
\end{definition}

We have that for any modal language with a normal (relational) semantics, including $\lang$, if ${M^1,s \leftrightarrows M^2,t}$ then for any formula $\phi$, $M^1,s \models \phi$ iff $M^2,t \models \phi$.

\end{document}